\documentclass{article} 
\usepackage{iclr2027_conference,times}

\usepackage{amsmath,amsfonts,bm}

\def\eqref#1{equation~\ref{#1}}

\def\1{\bm{1}}

\DeclareMathAlphabet{\mathsfit}{\encodingdefault}{\sfdefault}{m}{sl}
\SetMathAlphabet{\mathsfit}{bold}{\encodingdefault}{\sfdefault}{bx}{n}

\usepackage{hyperref}
\usepackage{url}
\usepackage{amsmath,amssymb,amsthm,bm}
\usepackage{mathtools}
\usepackage{algorithm}
\usepackage{algpseudocode}
\usepackage{tikz}
\usetikzlibrary{arrows.meta}
\usepackage{graphicx}
\usepackage{geometry}
\usepackage{booktabs}
\usepackage{multirow}
\usepackage{xcolor}

\usepackage[table]{xcolor}
\usepackage{booktabs}
\usepackage{graphicx}

\newtheorem{lemma}{Lemma}
\title{GFD-OPD: Guidance-Folded On-Policy Distillation of Diffusion Models Across Scales}

\author{
\centerline{
\textbf{
Zhenxing Zhang\textsuperscript{1,\,*,\,\textdagger},
Jiayan Teng\textsuperscript{2,\,*,\,\textdaggerdbl},
Wenxu Wu\textsuperscript{3},
Zhuoyi Yang\textsuperscript{2},
Jiazheng Xu\textsuperscript{2},
}
}\\
\centerline{
\textbf{
Wendi Zheng\textsuperscript{2},
Jie Tang\textsuperscript{2},
Dan Guo\textsuperscript{1},
Meng Wang\textsuperscript{1,\,\textsection}
}
}\\
\centerline{
\textsuperscript{1}Hefei University of Technology \quad
\textsuperscript{2}Tsinghua University \quad
\textsuperscript{3}Zhipu AI
}
}
\newcommand{\cfg}{\mathrm{cfg}}

\iclrfinalcopy 
\begin{document}
\pagestyle{plain} 

\maketitle
{\renewcommand{\thefootnote}{\fnsymbol{footnote}}
\footnotetext[1]{Equal contribution.}
\footnotetext[2]{Work done during internship at Zhipu AI.}
\footnotetext[3]{Technical lead.}
\footnotetext[4]{Corresponding author: \texttt{eric.mengwang@gmail.com}}
}

\begin{abstract}
On-policy distillation (OPD) has demonstrated two important capabilities in language models: compressing large teachers into smaller students and merging expert models into a single model. Existing diffusion OPD, however, mostly focus on the latter, with teachers and students sharing the same backbone and scale. We investigate large-to-small diffusion opd from large teachers to a small student and find that the standard recipe fails. To find the underlying cause, we propose \textbf{Fixed-State KL}, an effective and fair way to measure the distribution gap between student and teacher during OPD training for diffusion models. We are the first to clarify why large-to-small OPD is challenging for diffusion models: a smaller student struggles to perfectly match the distribution of a larger teacher, while classifier-free guidance can accumulate and amplify the distributional discrepancies between the student’s conditional and unconditional branches and those of the teacher. To solve this problem, we propose \textbf{GFD-OPD}, a simple yet effective method that reduces the student–teacher gap while avoiding the error amplification of the CFG composition. Across numerous experiments, GFD outperforms previous baselines in both training efficiency and final performance, achieving state-of-the-art results on all benchmarks. The code and checkpoints for this study are available at \href{https://github.com/CriliasMiller/GFD-OPD}{\textit{\textcolor{blue}{here}}}. 
\end{abstract}

\begin{figure}[htb]
\vspace{-1em}
\centering
\includegraphics[width=\linewidth]{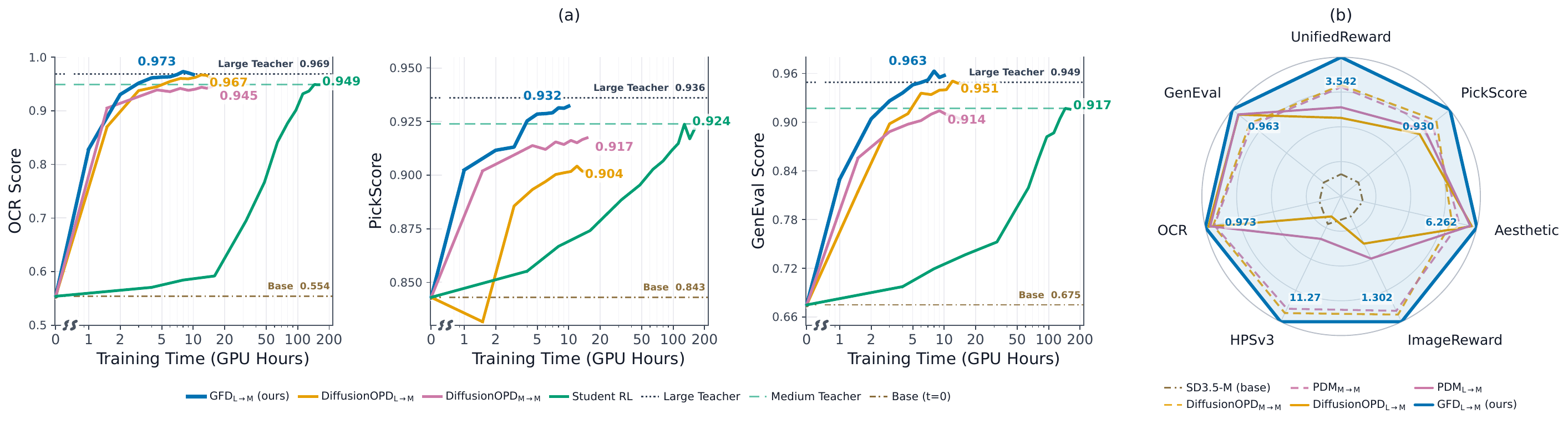}
\vspace{-1em}
\caption{GFD-OPD distills SD3.5-Large into SD3.5-Medium.
(a)~Score versus training compute (GPU hours) on OCR, PickScore, and
GenEval: GFD-OPD overtakes all baselines within $10$ GPU-hours.
(b)~GFD-OPD ranks first on all seven evaluation dimensions, covering both
task rewards and image quality.}
\label{fig:eff}
\end{figure}
\section{Introduction}\label{sec:intro}

On-policy distillation (OPD) has recently emerged as a powerful post-training recipe for large
language models. Like reinforcement learning (RL), OPD trains the student on trajectories sampled
from its own policy and therefore optimizes the student's own distribution; unlike RL, which has to
work with sparse and delayed rewards, OPD lets a teacher provide dense supervision at every step of
the student-generated trajectory. This fine-grained feedback largely removes the credit-assignment
difficulty of RL, so OPD enjoys a more stable optimization and a much better sample efficiency at a
comparable final quality.

In language models OPD is primarily applied in two paradigms. The first is \emph{strong-to-weak}
distillation, where the capability of a large teacher is transferred to a small student that is
cheap to deploy~\citep{opd}. The second is \emph{capability merging}, where several specialist
models derived from the same base model are merged into a single student without conflicts~\citep{glm5}.
Diffusion models have begun to adopt OPD as a post-training method as
well~\citep{diffusionopd,flowopd}, but the existing diffusion work mostly focus on the second paradigm. 
The first paradigm, distilling a large diffusion model into a small one, remains unexplored. 
It is also important: a compact model that inherits capabilities of a larger teacher can achieve near-teacher quality at inference while retaining a low cost close to that of the smaller model.

Our goal is to investigate effective approaches for distilling the capabilities of large diffusion models into smaller ones. Although DiffusionOPD performs effectively when merging models with the same architecture, we find directly extending it to large-to-small distillation leads to severe degradation. As shown in Fig.~\ref{fig:artifacts}, generated images exhibit artifacts, such as corrupted regions, loss of fine-grained textures and so on.

To understand the source of this degradation, we propose \textbf{Fixed-State KL} for measuring the distributional discrepancy between the student and teacher.  Specifically, we compute KL divergence on states sampled from normal teacher trajectories and noise-corrupted trajectories constructed from real images, providing a fair and well-behaved evaluation distribution for all methods.

Under this evaluation metric, we find that large-to-small DiffusionOPD produces a substantially larger distributional discrepancy between the student and teacher than its same-architecture counterpart. Furthermore, we quantitatively characterize models' final sampling result through spectral energy analysis of the final latent representations and high-pass texture analysis of the decoded images, which quantify the deviation from the teacher at both coarse-grained (e.g., global tone) and fine-grained (e.g., local texture) levels.

Through extensive analysis, we indicate that a smaller-capacity student consistently struggles to perfectly match the distribution of a larger teacher. Besides, classifier-free guidance (CFG), used during diffusion sampling, can accumulate and amplify the distributional discrepancies between the student's conditional and unconditional branches and those of the teacher, causing a larger deviation in the final generation trajectory.

Motivated by these observations, we propose a simple yet effective method, \textbf{Guidance Folding Distillation (GFD)}. GFD reduces the distributional discrepancy between the student and teacher while avoiding the error amplification introduced by CFG composition, enabling effective large-to-small OPD for diffusion models.

Our contributions are as follows.
\begin{itemize}
\item We propose \textbf{Fixed-State KL}, an effective and fair way to measure the distribution gap between student and teacher during OPD training for diffusion models.
\item Through multifaceted analysis, we are the first to clarify \textbf{why large-to-small OPD is challenging for diffusion models}. To address these challenges, we propose \textbf{GFD}, a simple and effective method that reduces the student--teacher gap while avoiding the error amplification of the CFG composition.
\item Through multiple experiments, combined with quantitative and qualitative analysis, we show the effectiveness of GFD. GFD removes the artifacts in cross-scale DiffusionOPD and outperforms previous baselines in both training efficiency and final performance.
\end{itemize}\
\vspace{-1.5em}

\section{method}\label{sec:motivation}

\subsection{Preliminary: On-Policy Distillation in Language Models and Diffusion Models}\label{sec:prelim}
On-policy distillation (OPD) trains the student on trajectories sampled from its own policy, while the teacher provides dense supervision on every visited state. In language models, OPD minimizes the reverse KL between the student policy $\pi_{\bm\theta}$ and the frozen teacher $\pi^\star$ along student-generated sequences:
\begin{equation}
\mathcal{L}_{\mathrm{OPD}}^{\mathrm{LLM}}(\bm\theta)
= \mathbb{E}_{\bm y\sim\pi_{\bm\theta}}
\Big[\sum_{t=1}^{|\bm y|}
\mathrm{KL}\big(\pi_{\bm\theta}(\cdot\mid \bm y_{<t})\,\|\,\pi^\star(\cdot\mid \bm y_{<t})\big)\Big].
\label{eq:opd-llm}
\end{equation}

\vspace{-0.8em}
DiffusionOPD transfers the same idea to diffusion and flow-matching samplers. At each denoising step $j$, the student and teacher induce Gaussian transition kernels $p_S(x_{t_{j+1}}\mid x_{t_j})$ and $p_T(x_{t_{j+1}}\mid x_{t_j})$ whose covariance $\sigma_j^2\bm I$ is fixed by the shared sampler, so the per-step reverse KL has the closed form $\|\mu_S(x_{t_j};\bm\theta)-\mu_T(x_{t_j})\|_2^2/(2\sigma_j^2)$. Since the transition mean is affine in the velocity field with sampler-fixed coefficients, and sampling deploys the CFG-guided velocity, DiffusionOPD matches the CFG-guided output to the teacher's:
\begin{equation}
\mathcal{L}_{\mathrm{OPD}}^{\mathrm{diff}}(\bm\theta)
=\mathbb{E}_{x_{0:N}\sim p_{S,\bm\theta}}
\left[\sum_{j=0}^{N-1} \omega_j
\big\|v^g_S(x_{t_j},t_j,\bm c)-v^g_T(x_{t_j},t_j,\bm c)\big\|_2^2\right],
\label{eq:opd-diff}
\end{equation}
where $v^{g}_{\star}=v^{u}_{\star}+w\,(v^{c}_{\star}-v^{u}_{\star})$, with $\star\!\in\!\{S,T\}$ indexing the student and the teacher, is the guided velocity composed from the conditional and unconditional branches $v^{c}_{\star},v^{u}_{\star}$ at guidance scale $w$, and $\omega_j=\eta_j^{2}/(2\sigma_j^{2})$ is fixed by the sampler, with $\eta_j$ the coefficient of the velocity in the transition mean and $\sigma_j^{2}$ the per-step variance; $\eta_j=\big(1+\sigma_{t_j}^{2}(1-t_j)/(2t_j)\big)\Delta t_j$ and $\sigma_j^{2}=\sigma_{t_j}^{2}\Delta t_j$. In both SDE and ODE settings, DiffusionOPD is thus a pathwise, closed-form velocity-matching problem along the student's rollout.

\subsection{Fixed-state KL Evaluation}\label{sec:kl_eval}
We first directly apply the standard DiffusionOPD objective to the large-to-small OPD setting. However, after training, from Fig.~\ref{fig:artifacts} we can see that the resulting images exhibit obvious visual artifacts: they contain corrupted textures and are generally darker. To further quantify the degradation in generation quality, we conduct a spectral energy analysis on the final latent representations and a high-pass texture analysis on the decoded images. As shown in Fig.~\ref{fig:w_sweep}, DiffusionOPD$_{\mathrm{L\rightarrow M}}$ deviates substantially from the teacher across all evaluated aspects.

To investigate the underlying cause, we need to measure the distributional discrepancy between the student and teacher. A natural choice is to use the objective in OPD training: the reverse KL divergence between the student and teacher distributions, computed along the trajectory rolled out by the student. However, although this formulation is well suited as a training objective, we argue that it is not an appropriate metric to evaluate the distributional discrepancy between the student and the teacher after training.

There are two main reasons. First, different training methods generally produce different student policies. Computing the KL divergence on each student's own trajectories therefore evaluates the models on different sets of states, confounding the effect of distributional mismatch with differences in state visitation. It makes the resulting metrics difficult to compare in a fair and controlled manner. 

Second, the student model may deviate from the normal generation trajectory during training and enter abnormal states. As pointed out by~\cite{xin2026escaping}, the teacher is no longer guaranteed to provide reliable predictions in such states,  since the queried states may lie far outside the region on which the teacher's behavior is meaningful. Consequently, the KL divergence computed on these abnormal states may no longer faithfully reflect the discrepancy between a well-behaved student and the teacher.

\begin{figure}[t]
\vspace{-0.5em}
\begin{minipage}[t]{0.36\linewidth}
\vspace{0pt}\centering
\includegraphics[width=\linewidth]{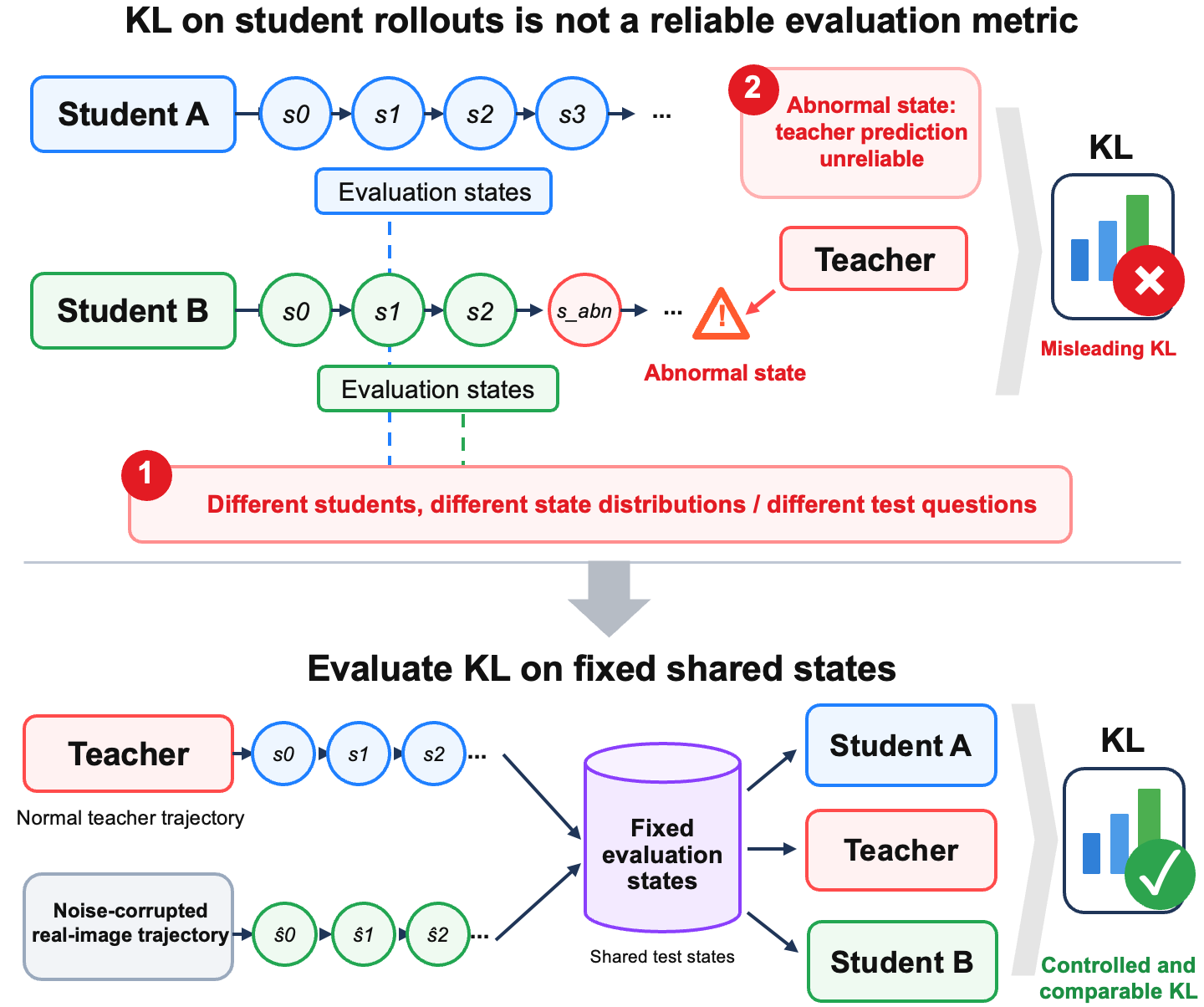}
\vspace{-2em}
\caption{\small Illustration of the KL evaluation.}
\label{fig:kl_eval}
\end{minipage}\hfill
\begin{minipage}[t]{0.57\linewidth}
\vspace{0pt}\vspace{-\abovecaptionskip}\centering\small
\makeatletter\def\@captype{table}\makeatother
\caption{KL divergence between each student and its corresponding teacher. $\mathrm{P_{stu}}$ and $\mathrm{P_{tea}}$ denote sampling trajectories generated by the student and teacher models, respectively, while $\mathrm{P_{real}}$ denotes trajectories constructed by adding noise to real images.}
\label{tab:kl-eval}\label{tab:kl_result}
\resizebox{\linewidth}{!}{%
\begin{tabular}{l c c c cc}
\toprule
& KL ($\mathrm{P_{tea}}$) & KL ($\mathrm{P_{real}}$) & KL ($\mathrm{P_{stu}}$) & \multicolumn{2}{c}{branch KL ($\mathrm{P_{tea}}$)}\\
\cmidrule(lr){2-2}\cmidrule(lr){3-3}\cmidrule(lr){4-4}\cmidrule(lr){5-6}
Student & guided & guided & guided & cond & uncond\\
\midrule
DiffusionOPD$_{\mathrm{M\rightarrow M}}$  & 0.041 & 0.022 & 0.039 & 0.017 & 0.023\\
\midrule\midrule
DiffusionOPD$_{\mathrm{L\rightarrow M}}$ & 0.068 & 0.046 & 0.057 & 0.037 & 0.039\\
PDM$_{\mathrm{L\rightarrow M}}$          & 0.063 & 0.046 & 0.052 & 0.023 & 0.022\\
split-KL$_{\mathrm{L\rightarrow M}}$     & 0.074 & 0.049 & 0.065 & \textbf{0.021} & \textbf{0.020}\\
\midrule
GFD$_{\mathrm{L\rightarrow M}}$ ($w\!=\!1$)  & \textbf{0.050} & \textbf{0.030} & \textbf{0.043} & -- & --\\
GFD$_{\mathrm{L\rightarrow M}}$ ($w\!=\!2$)  & 0.053 & 0.032 & 0.053 & -- & --\\
\bottomrule
\end{tabular}}
\end{minipage}
\vspace{-1em}
\end{figure}

Motivated by these considerations, we propose \textbf{Fixed-State KL}, which is performed on a fixed and normal set of states shared by all student models. Specifically, we use states from normal teacher trajectories, together with noise-corrupted trajectories constructed from real images, as common evaluation inputs for computing the KL divergence between the student and teacher. By decoupling the evaluation states from the student's own rollout distribution, this protocol enables a more controlled and comparable assessment of how closely different students match the teacher distribution.

Furthermore, as shown in Tab.~\ref{tab:kl_result}, the KL divergence evaluated on student rollout trajectories exhibits a markedly different trend from that computed on teacher rollout trajectories and noise-corrupted trajectories constructed from real images. The former fails to reflect the true trend of student--teacher distributional discrepancy. It may provide a misleading evaluation signal.

\subsection{From Cross-scale DiffusionOPD to GFD}

Based on the KL evaluation above, we can figure out why the performance of DiffusionOPD in $L\!\rightarrow\!M$ setting degrade severely.
As shown in Table~\ref{tab:kl_result}, the guided-output KL, conditional KL, and unconditional KL of DiffusionOPD$_{\mathrm{L\rightarrow M}}$ are all higher than those of DiffusionOPD$_{\mathrm{M\rightarrow M}}$. ``guided output'' refers to the CFG-combined output used during sampling. Accordingly, the final evaluation should focus on the distributional KL divergence between the student’s guided output and the teacher’s guided output.

\begin{figure}[h]
\centering
\includegraphics[width=\linewidth]{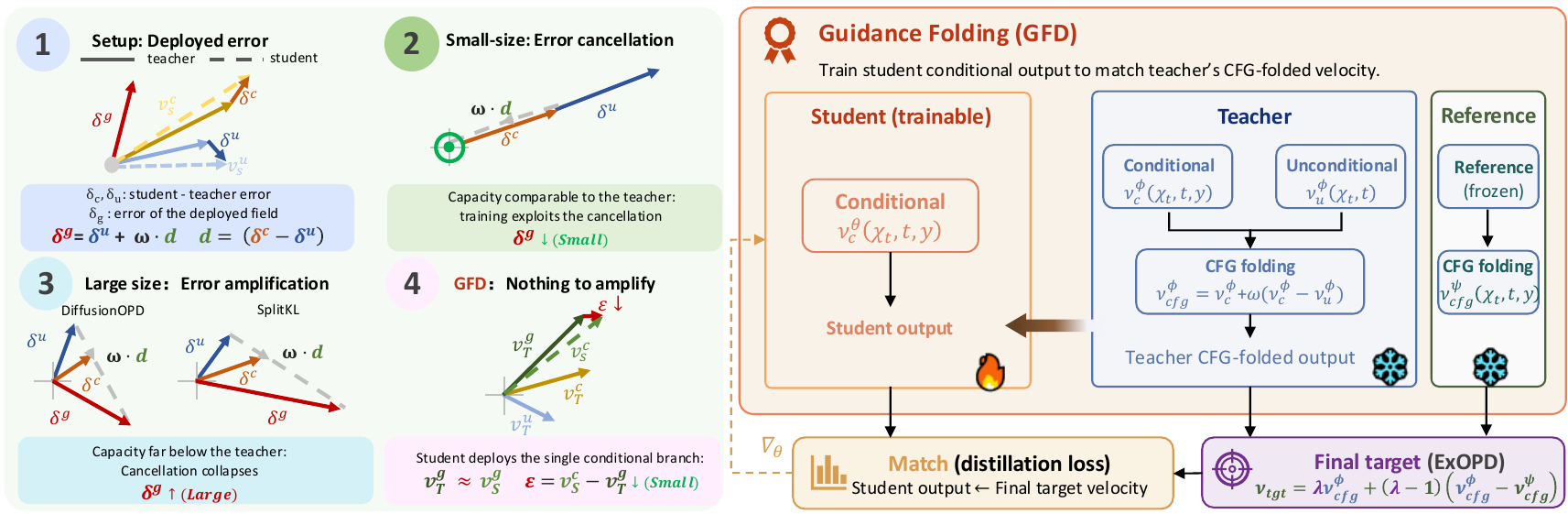}
\vspace{-2em}
\caption{\textbf{GFD-OPD overview.} \textbf{Left} (motivation): the deployed error of a CFG-composed student decomposes as $\delta_g=\delta_u+\omega\,d$ --- same-size distillation survives through error cancellation, large-to-small does not, and guidance folding removes the composition so that nothing is left for $\omega$ to amplify. \textbf{Right} (method): the process of GFD-OPD.}
\label{fig:overview}
\vspace{-0.5em}
\end{figure}

\paragraph{Attempting branch-wise supervision}
Recall that DiffusionOPD objective directly matches the student's guided prediction $v^g_S$ to the teacher's guided prediction $v^g_T$. It lacks separately constraining of the conditional and unconditional branches. Such an objective may be sufficient when the student has comparable model capacity to the teacher, as in the $M\!\rightarrow\!M$ setting. However, in a large-to-small setting, the smaller student has limited approximation capacity, which may make the conditional and unconditional branches more prone to drifting away from their teacher counterparts. Such branch-wise drift may in turn lead to a larger discrepancy in the final guided output.

This observation motivates us to explicitly supervise the conditional and unconditional branches separately, thereby preventing them from drifting. A natural choice is to adopt the following training objective:
\begin{equation}
    \mathcal{L}
    =\left\| v_S^c - v_T^c \right\|_2^2
    +
    \left\| v_S^u - v_T^u \right\|_2^2,
\end{equation}
\vspace{-1.5em}

PDM~\citep{pdmopd} also propose a similar objective as 
\begin{equation}
    \mathcal{L}
    =\left\| v_S^c - v_T^c \right\|_2^2
    +
    \alpha\left\| v_S^g - v_T^g \right\|_2^2,
\end{equation}
However, their motivation is fundamentally different from ours: we use this objective to prevent the conditional and unconditional branches from drifting in the large-to-small setting but theirs are not.

After training with separate supervision on the two branches, from Table~\ref{tab:kl_result}, we observe that the KL divergences of both the conditional and unconditional branches are indeed reduced. However, the KL divergence of the final guided output does not decrease accordingly. Consistent with this observation, the generated images in Fig.~\ref{fig:artifacts} still exhibit noticeable quality degradation. As shown in Fig.~\ref{fig:w_sweep}, the corresponding quantitative metrics also remain substantially deviated from those of the teacher.

\paragraph{Why the branch error induce but the guided field does not?}

\begin{figure}[htbp]
\vspace{-1em}
\centering
\includegraphics[width=0.9 \linewidth]{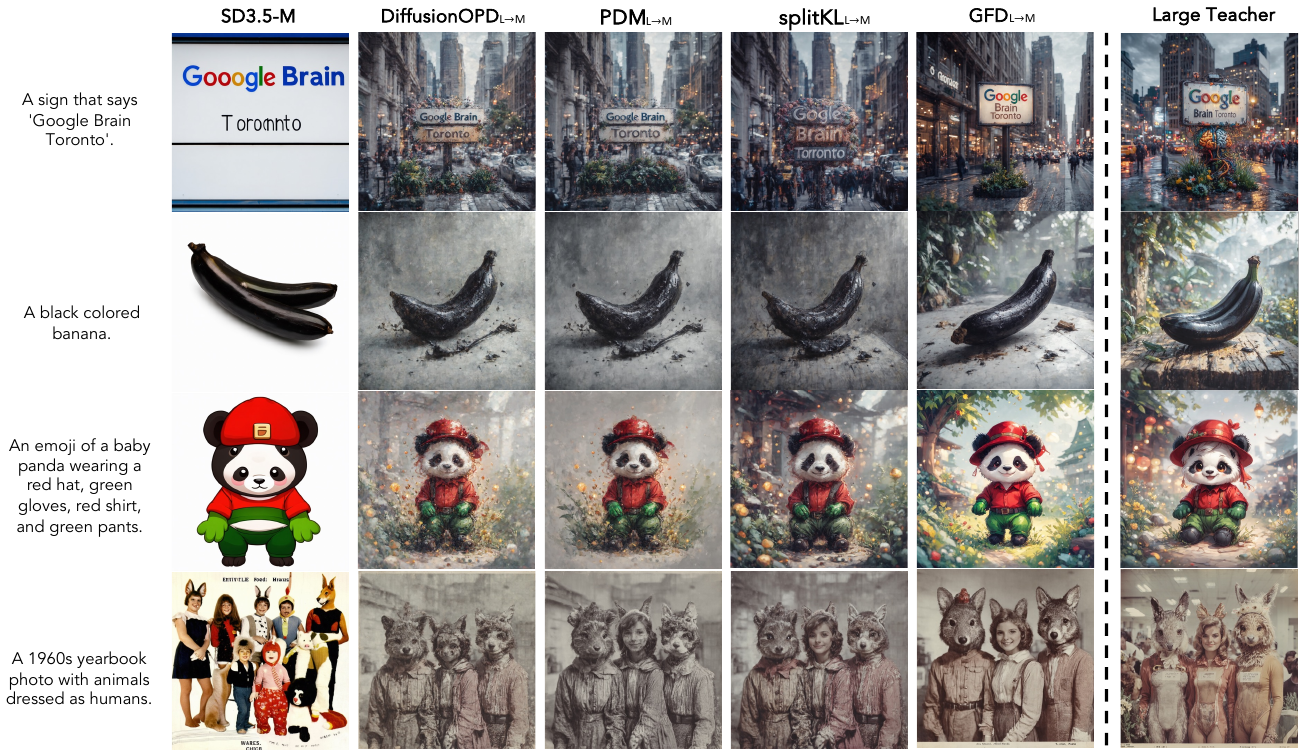}
\vspace{-0.5em}
\caption{Shared cross-architecture artifacts under a SD3.5-L teacher. Rows:
the SD3.5-M base, three prior OPD recipes (\emph{DiffusionOPD},
\emph{PDM}, \emph{splitKL}), GFD and large teacher. }
\label{fig:artifacts}
\end{figure}

To investigate the cause of the increased guided-output KL, we decompose the error of the guided prediction as
\begin{equation}
    \delta^g = \delta^u + w d,
\end{equation}

\vspace{-0.5em}
where
$
\delta^g = v^g_S - v^g_T,
\delta^u = v^u_S - v^u_T,
\delta^c = v^c_S - v^c_T,
$
and
$
d = \delta^c - \delta^u.
$
Table~\ref{tab:error_decompose} reveals that, when trained to directly match the guided output, DiffusionOPD$_{\mathrm{M\rightarrow M}}$ is able to exploit the cancellation between the two error components, $\delta^u$ and $wd$, resulting in a relatively small final guided error $\delta^g$. In contrast, for DiffusionOPD$_{\mathrm{L\rightarrow M}}$, the limited approximation capacity of the smaller student weakens such error cancellation. For PDM and SplitKL, separately supervising the conditional and unconditional branches largely removes this cancellation effect, causing the errors to be amplified rather than canceled after guidance composition. We provide a more detailed analysis in Section~\ref{subsec:errpr_decompose}.

This observation further suggests that, in the large-to-small setting, discrepancies in the conditional and unconditional branches can be systematically amplified by the guidance operation, ultimately leading to a larger shift in the final generative distribution. 

\paragraph{Guidance Folding for Large-to-Small Distillation}
This motivates a different design: directly absorb the teacher's guided policy into the student's conditional branch, eliminating the need to recover the target distribution through the composition of two imperfectly matched branches. We propose \textbf{Guidance Folding Distillation (GFD)}. Rather than jointly optimizing the student's conditional and unconditional distributions, it converts the original two-branch composition problem into a single-branch velocity matching problem.

Formally, GFD minimize the discrepancy between the student's conditional distribution and the teacher's guided distribution:
\begin{equation}
    \min_{\theta}
    \mathbb{E}_{x_t \sim d^S}
    \left[
        D_{\mathrm{KL}}
        \left(
            \pi^c_S(\cdot \mid x_t, c)
            \,\Vert\,
            \pi^g_T(\cdot \mid x_t, c)
        \right)
    \right].
\end{equation}

Applying the per-step derivation of Eq.~\ref{eq:opd-diff}, the GFD objective can be written explicitly as
\begin{equation}
\mathcal{L}_{\mathrm{GFD}}(\bm\theta)
=\mathbb{E}_{x_{0:N}\sim p_{S,\bm\theta}}
\left[\sum_{j=0}^{N-1}
\omega_j
\big\|v_S^{c}(x_{t_j},t_j,\bm c;\bm\theta)-v_T^{g}(x_{t_j},t_j;w)\big\|_2^2\right].
\label{eq:gfd}
\end{equation}
Table~\ref{tab:kl_result} shows that GFD achieves substantially lower KL divergence than the other methods. As illustrated in Fig.~\ref{fig:artifacts} and Fig.~\ref{fig:w_sweep}, GFD also restores normal generation quality, and the generated images are closer to those of the teacher in both coarse-grained and fine-grained metrics.

But a small, systematic gap remains. Fig.~\ref{fig:w_sweep} shows that GFD ($w\!=\!1$) has slightly weaker spectral energy than the teacher's guided output. And Appendix~\ref{sec:app-proj} shows that the student learns the direction of the teacher's field but shrinks its magnitude toward its own prior. At inference this can be patched with a slightly larger guidance scale, with $w\!=\!2$ the student distribution moves closer to the teacher's. GFD does not constrain the unconditional branch during training. In the low-noise regime, the unconditional branch co-evolves with the conditional branch and exhibits similar shifts, resulting in the CFG only containing a small increment and avoiding excessive guidance. Although the trajectory-level KL increases slightly at \(w\!=\!2\), we attribute this to the limited sensitivity of this metric to differences at such a small scale. 

\subsection{Reward Extrapolation for Large-to-Small Distillation}\label{sec:gopd}

\citet{exopd} show that OPD in language models is a KL-regularized RL objective in which the teacher enters through a dense reward, and that scaling this reward by a factor $\lambda>1$ moves the distillation target away from the reference model and beyond the teacher. We adopt this idea to further improve the student: by extrapolating the target in the direction away from the student's prior, the trained field is brought closer to the teacher without relying on an inference-time guidance increment.

In language models, the ExOPD objective is equivalent to standard OPD with the teacher replaced by the target $\pi_{\mathrm{tgt}}\propto(\pi^\star)^{\lambda}(\pi_{\mathrm{ref}})^{1-\lambda}$, where $\pi_{\mathrm{ref}}$ is a frozen reference policy. We apply the same replacement to the per-step transition kernels of diffusion OPD: $p_{\mathrm{tgt}}\propto p_T^{\lambda}\,p_R^{1-\lambda}$, where $p_R$ is the kernel induced by the reference velocity $v_R$. Since both kernels are Gaussians with the same sampler-fixed variance, $p_{\mathrm{tgt}}$ remains a valid Gaussian kernel with that variance for every $\lambda$, and only its mean moves, corresponding to the extrapolated velocity (Appendix~\ref{app:exopd}):

\begin{equation}
v_{\mathrm{tgt}}
=\lambda\, v_T+(1-\lambda)\,v_R
=v_T+(\lambda-1)(v_T-v_R).
\label{eq:v-tgt}
\end{equation}


\vspace{-1em}
\section{Experiments}\label{sec:exp}


\subsection{Experimental Setup}\label{sec:setup}
\begin{table}[t]
\vspace{-2em}
\centering
\begin{minipage}[t]{0.615\textwidth}
\centering
\begin{minipage}[t][10mm][t]{\linewidth}
\caption{SD3.5 merged multi-objective OPD results.}
\label{tab:main}
\end{minipage}\\
\renewcommand{\arraystretch}{1.39}
\resizebox{\linewidth}{!}{%
\begin{tabular}{@{}l ccc cccc@{}}
\toprule
& \multicolumn{3}{c}{Task rewards} & \multicolumn{4}{c}{Image quality} \\
\cmidrule(lr){2-4}\cmidrule(lr){5-8}
Model & GenEval & OCR & \begin{tabular}[c]{@{}c@{}}Pick\\Score\end{tabular} & \begin{tabular}[c]{@{}c@{}}Aes-\\thetic\end{tabular} & \begin{tabular}[c]{@{}c@{}}Image\\Reward\end{tabular} & \begin{tabular}[c]{@{}c@{}}Unified\\Reward\end{tabular} & HPSv3 \\
\midrule
SD3.5-M (base) & 0.675 & 0.554 & 0.843 & 5.373 & 0.907 & 3.201 & 9.020 \\
SD3.5-L (base) & 0.720 & 0.716 & 0.854 & 5.527 & 1.025 & 3.289 & 9.853 \\
\midrule
\multicolumn{8}{@{}l}{\emph{Per-task Large teachers}}\\
\quad GenEval-Teacher & 0.949 & 0.801 & 0.834 & 5.463 & 1.144 & 3.416 & 9.812 \\
\quad OCR-Teacher & 0.700 & 0.969 & 0.850 & 5.405 & 1.057 & 3.298 & 8.982 \\
\quad PickScore-Teacher & 0.781 & 0.688 & 0.936 & 6.245 & 1.300 & 3.535 & 11.29 \\
\midrule
\multicolumn{8}{@{}l}{\emph{Per-task Medium teachers}}\\
\quad GenEval-Teacher & 0.917 & 0.558 & 0.845 & 5.336 & 1.074 & 3.332 & 9.024 \\
\quad OCR-Teacher & 0.599 & 0.948 & 0.824 & 5.129 & 0.749 & 3.071 & 6.345 \\
\quad PickScore-Teacher & 0.696 & 0.605 & 0.924 & 6.188 & 1.283 & 3.460 & 11.55 \\
\midrule
DiffusionOPD$_{\mathrm{M\rightarrow M}}$ & 0.908 & 0.944 & 0.917 & 6.067 & 1.275 & 3.461 & 11.07 \\
PDM$_{\mathrm{M\rightarrow M}}$ & 0.902 & 0.940 & 0.913 & 6.124 & 1.261 & 3.455 & 10.97 \\
DiffusionOPD$_{\mathrm{L\rightarrow M}}$ & 0.943 & 0.962 & 0.901 & 6.222 & 1.009 & 3.365 & 8.844 \\
PDM$_{\mathrm{L\rightarrow M}}$ & 0.944 & 0.955 & 0.906 & 6.211 & 1.065 & 3.396 & 9.360 \\
split-KL$_{\mathrm{L\rightarrow M}}$ & 0.939 & 0.959 & 0.912 & 6.211 & 1.103 & 3.417 & 10.08 \\
GFD$_{\mathrm{L\rightarrow M}}$(Ours) & \textbf{0.963} & \textbf{0.973} & \textbf{0.930} & \textbf{6.262} & \textbf{1.302} & \textbf{3.542} & \textbf{11.27} \\
\bottomrule
\end{tabular}}
\end{minipage}\hfill
\begin{minipage}[t]{0.36\textwidth}
\centering
\begin{minipage}[t][10mm][t]{\linewidth}
\caption{SD3.5 single-objective OPD results.}
\label{tab:single}
\end{minipage}\\

\resizebox{\linewidth}{!}{%
\begin{tabular}{@{}l ccc c@{}}
\toprule
& \multicolumn{3}{c}{Task rewards} & \begin{tabular}[c]{@{}c@{}}Image\\quality\end{tabular} \\
\cmidrule(lr){2-4}\cmidrule(lr){5-5}
Model & GenEval & OCR & \begin{tabular}[c]{@{}c@{}}Pick\\Score\end{tabular} & HPSv3 \\
\midrule
SD3.5-Medium (base) & 0.675 & 0.554 & 0.843 & 9.020 \\
\midrule
Teachers (SD3.5-L) & 0.949 & 0.969 & 0.936 & -- \\
\midrule
\multicolumn{5}{@{}l}{\emph{DiffusionOPD$_{\mathrm{L\rightarrow M}}$}}\\
\quad GenEval & 0.948 & -- & -- & 6.844 \\
\quad OCR & -- & 0.967 & -- & 6.097 \\
\quad PickScore & -- & -- & 0.904 & 8.377 \\
\midrule
\multicolumn{5}{@{}l}{\emph{GFD $_{\mathrm{L\rightarrow M}}$(Ours)}}\\
\quad GenEval & \textbf{0.963} & -- & -- & 9.128 \\
\quad OCR & -- & \textbf{0.973} & -- & 8.755 \\
\quad PickScore & -- & -- & \textbf{0.932} & \textbf{11.24} \\
\bottomrule
\end{tabular}}

\vspace{0em}
\caption{FLUX.2 merged OPD results.}
\label{tab:flux}
\resizebox{\linewidth}{!}{%
\begin{tabular}{@{}l ccc c@{}}
\toprule
& \multicolumn{3}{c}{Task rewards} & \begin{tabular}[c]{@{}c@{}}Image\\quality\end{tabular} \\
\cmidrule(lr){2-4}\cmidrule(lr){5-5}
Model & LongText & GenEval & \begin{tabular}[c]{@{}c@{}}Pick\\Score\end{tabular} & HPSv3 \\
\midrule
FLUX.2-4B (base) & 0.624 & 0.769 & 0.823 & 9.214 \\
\midrule
Teacher (FLUX.2-9B) & 0.891 & 0.820 & 0.839 & 9.629 \\
PDM$_{\mathrm{L\rightarrow M}}$ & 0.635 & 0.815 & 0.829 & 8.904 \\
DiffusionOPD$_{\mathrm{L\rightarrow M}}$ & 0.635 & 0.830 & 0.832 & 9.180 \\
GFD$_{\mathrm{L\rightarrow M}}$(Ours) & \textbf{0.692} & \textbf{0.880} & \textbf{0.839} & \textbf{10.20} \\
\midrule
Teacher (FLUX.2-32B) & 0.979 & 0.893 & 0.862 & 11.29 \\
PDM$_{\mathrm{L\rightarrow M}}$ & 0.763 & 0.887 & 0.849 & 10.17 \\
DiffusionOPD$_{\mathrm{L\rightarrow M}}$ & 0.828 & 0.885 & 0.856 & \textbf{10.69} \\
GFD$_{\mathrm{L\rightarrow M}}$(Ours) & \textbf{0.857} & \textbf{0.908} & \textbf{0.859} & 10.47 \\
\bottomrule
\end{tabular}}
\end{minipage}
\vspace{-0.75em}
\end{table}
\textbf{Tasks and evaluation.} We follow the three-reward suite of Flow-GRPO~\citep{flowgrpo} and reuse its official prompt splits: GenEval~\citep{geneval} for compositional alignment, OCR accuracy for visual text rendering, and PickScore~\citep{pickscore} optimized on Pick-a-Pic prompts and evaluated on DrawBench~\citep{drawbench}. Task rewards are the primary metrics; following Flow-GRPO, we also report Aesthetic, ImageReward~\citep{imagereward}, UnifiedReward, and HPSv3 on DrawBench as image-quality probes. All methods are evaluated at $1024\times1024$ with the same prompts, seeds, and sampler, each at its own deployment guidance scale ($w=4.5$ for CFG-composition students, $w=2$ for GFD-OPD; \S\ref{sec:motivation}).

\textbf{Baselines.} (a)~The pretrained SD3.5-Medium and SD3.5-Large base models~\citep{sd3}; (b)~direct Flow-GRPO on both, which also serve as our teachers; (c)~DiffusionOPD, PDM, and the split-KL variant of \S\ref{sec:motivation}, trained with the same teachers, student initialization, and prompts as ours.

\textbf{Implementation details.} All OPD methods share the same on-policy setup: following Fast Flow-GRPO, rollouts use a 10-step reverse-SDE sampler with a 2-step training window. Training uses full-parameter fine-tuning at $1024\times1024$ with learning rate $5\times10^{-6}$ and batch size 48 on 8 H100 GPUs; each single-task GFD-OPD run finishes in about 10 GPU-hours, versus 15 for DiffusionOPD and 160 for direct Flow-GRPO (Fig.~\ref{fig:eff}). GFD-OPD composes the folded target at $w=2$ with extrapolation $\lambda=1.25$.

\subsection{Main Results}\label{sec:main}
\paragraph{Large-to-small distillation on SD3.5 and FLUX.2.}
Table~\ref{tab:single} reports the single-objective SD3.5 setting (Large$\to$Medium). GFD-OPD outperforms DiffusionOPD on all three rewards, surpasses the teacher on GenEval ($0.963$ vs.\ $0.949$) and OCR ($0.973$ vs.\ $0.969$), and is the only student whose HPSv3 stays at or above the untrained base---DiffusionOPD falls clearly below it, the fidelity loss behind the artifacts of \S\ref{sec:motivation}. In the merged multi-objective setting (Table~\ref{tab:main}), the baselines trade one side for the other: with the medium teacher they preserve image quality but are capped by its rewards; with the large teacher their rewards rise but quality drops (haze, desaturation, soft glyphs; Fig.~\ref{fig:artifacts}). GFD-OPD attains the best task rewards ($0.963$ GenEval, $0.973$ OCR, $0.930$ PickScore) and the best image quality at the same time; per \S\ref{sec:abl}, folding alone roughly matches the teacher and the extrapolated target accounts for the rest. The same picture holds on FLUX.2 (Table~\ref{tab:flux}; 9B and 32B teachers into 4B, OCR replaced by our long-text dataset): GFD-OPD leads on all task rewards, again exceeds the teacher on GenEval, and widens its LongText margin at 32B (qualitative examples in Fig.~\ref{fig:flux-quality}, appendix).

\begin{figure}[htbp]
\centering
\includegraphics[width=1\textwidth]{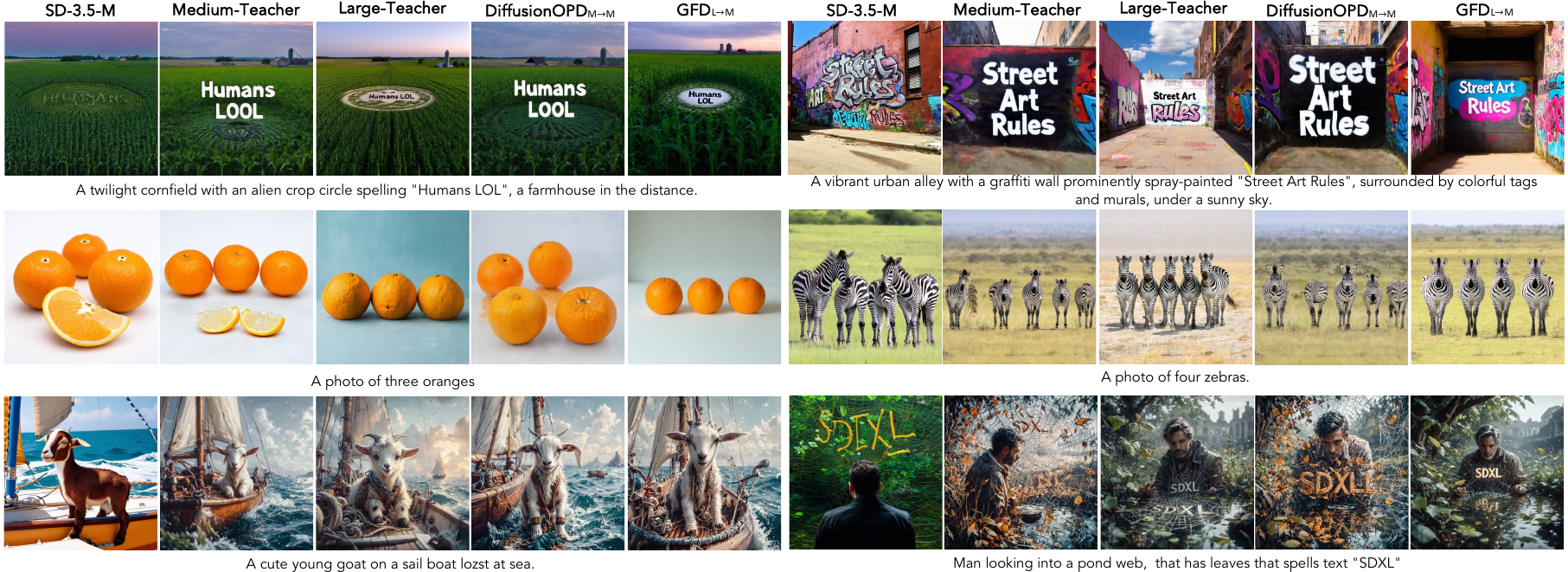}
\vspace{-1em}
\caption{Qualitative comparison at $1024\!\times\!1024$. Columns, left to right: SD3.5-M base; the medium and large teachers; DiffusionOPD$_{\mathrm{M\rightarrow M}}$; and GFD$_{\mathrm{L\rightarrow M}}$.}
\label{fig:quality}
\end{figure}

\paragraph{Qualitative comparison.}
Fig.~\ref{fig:quality} presents a qualitative comparison. Since the large-teacher composition students already suffer from the artifacts shown in Fig.~\ref{fig:artifacts}, we compare with DiffusionOPD distilled from the medium teacher, together with the base model and the two RL teachers. These medium-teacher students are free of visible artifacts, but their prompt adherence and fine-grained rendering are weaker: the crop-circle text in ``Humans LOL'' loses glyph structure, and the orange scene misses part of the required objects. Among all compared methods, GFD-OPD is the closest to the large teacher in both layout and fine detail. Besides, Fig.~\ref{fig:flux-quality} shows qualitative examples for the FLUX.2 experiments of Table~\ref{tab:flux}.

\vspace{-0.5em}
\subsection{Ablations}\label{sec:abl}

\paragraph{Component ablation}
Table~\ref{tab:ablation} adds the two components on top of standard OPD one at a time. Standard OPD improves the base on all three rewards but stays well below the teacher. Guidance folding contributes most of the gain, removing the artifacts of and bringing the student roughly to the teacher level; the extrapolated target adds a further consistent gain and pushes GenEval and OCR past the teacher. This supports the attribution: folding closes the gap to the teacher, and extrapolation provides the margin beyond it.

\begin{table}[h]
\vspace{-1em}
\centering
\caption{Ablation: standard OPD $+$ guidance folding $+$ extrapolated-mean
target (full GFD-OPD), on the SD3.5-Medium student at $1024\times1024$; HPSv3
probes image quality.}
\label{tab:ablation}
\small
\setlength{\tabcolsep}{4pt}
\renewcommand{\arraystretch}{0.9}
\begin{tabular}{lcccc}
\toprule
Method & GenEval & OCR & PickScore & HPSv3 \\
\midrule
SD3.5-Medium (base) & 0.675 & 0.554 & 0.843 & 9.020 \\
Standard OPD & 0.884 & 0.908 & 0.880 & 9.175 \\
\quad $+$ Guidance folding & 0.952 & 0.954 & 0.923 & 11.19 \\
\quad $+$ Extrapolated mean (GFD-OPD) &
\textbf{0.963} & \textbf{0.973} & \textbf{0.930} & \textbf{11.27} \\
\bottomrule
\end{tabular}
\vspace{-1em}
\end{table}

\paragraph{The impact of CFG scale}
In Fig.~\ref{fig:w_sweep}, we evaluate different models under varying CFG scales $w$, measuring the discrepancy between their final sampling results and those of their respective teacher models. DiffusionOPD$_\mathrm{M\rightarrow M}$ approaches the origin at $w=4.5$, indicating a close match to its teacher. In contrast, for DiffusionOPD, PDM, and Split-KL under the $\mathrm{L\!\rightarrow\!M}$ setting, no choice of $w$ brings the results close to the origin. Their outputs remain substantially different from those of the teacher. GFD achieves the closest match to the teacher at $w=2$.

\begin{figure}[h]
\centering
\vspace{-1em}
\includegraphics[width=0.85\linewidth]{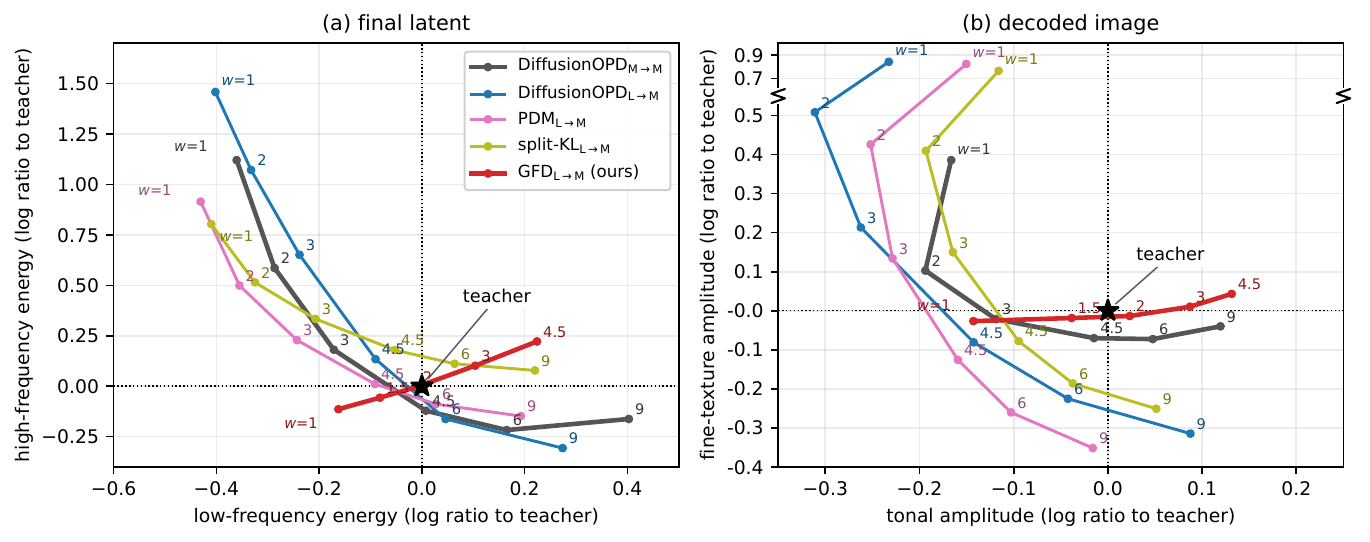}
\vspace{-1em}
\caption{Statistics of the student's final output as the guidance scale $w$ varies. Each axis is the log ratio of a statistic of the student's samples to the teacher's, so the teacher sits at the origin ($\star$).
\textbf{(a)} Final latent: low-frequency energy (global tone) vs.\ high-frequency energy (fine texture). \textbf{(b)} Decoded image: tonal amplitude (RMS contrast $\times$ saturation) vs.\ fine-texture amplitude (3px high-pass MAE). The definition of all metrics are as detailed in Appendix~\ref{subsec:fig2-metrics}.}
\label{fig:w_sweep}
\end{figure}

\vspace{-1.5em}
\section{Related Work}\label{sec:related}
\textbf{On-Policy Distillation.}
OPD~\citep{gkd,minillm,opd} post-trains language models by aligning the student with the teacher on student-generated trajectories, and is used both to compress large teachers into small students and to merge domain specialists into a single model. \citet{exopd} further interpret OPD as dense KL-constrained RL, we adopt this view in \S\ref{sec:gopd}.

\textbf{On-Policy Distillation for Diffusion Models.}
A recent line of work ports OPD to diffusion and flow-matching models~\citep{polyopd,flowopd,stepopd}. DiffusionOPD~\citep{diffusionopd} derives the closed-form per-step objective and merges task-specific teachers into a unified student; PDM~\citep{pdmopd} supervises the conditional and unconditional CFG branches separately. These methods target same-family merging, while the large-to-small regime we study remains largely unexplored.

\section{Conclusion}
We studied on-policy distillation of diffusion models in the large-to-small regime and showed that the
existing recipe does not transfer. To diagnose this we propose Fixed-State KL, which fairly makes methods comparable. 
The analysis attributes the failure to the CFG composition: a small student cannot fit
the teacher's guided field exactly, the composition amplifies the error of its guidance increment by the
guidance scale. GFD-OPD directly absorb the teacher’s guided policy into the student’s conditional branch. 
It reduces the student–teacher gap while avoiding the error amplification of the CFG composition.
We hope this study can encourage further investigation into how the distributional discrepancy between teacher and student affect OPD of diffusion models.

\bibliography{iclr2027_conference}
\bibliographystyle{iclr2027_conference}

\newpage
\appendix
\section{Derivation}

\subsection{Reward extrapolation and the extrapolated target}\label{app:exopd}

\paragraph{Reward extrapolation in language models.}
\citet{exopd} generalize the OPD objective (Eq.~\ref{eq:opd-llm}) by introducing a frozen reference policy $\pi_{\mathrm{ref}}$ and a reward weight $\lambda$:
\begin{equation}
J(\bm\theta)=
\mathbb{E}_{\bm y\sim\pi_{\bm\theta}(\cdot|\bm x)}
\Big[\lambda\log\frac{\pi^\star(\bm y|\bm x)}{\pi_{\mathrm{ref}}(\bm y|\bm x)}\Big]
-\mathrm{KL}\big(\pi_{\bm\theta}(\cdot|\bm x)\,\|\,\pi_{\mathrm{ref}}(\cdot|\bm x)\big).
\label{eq:gopd-llm}
\end{equation}
Standard OPD is the case $\lambda=1$ (the reference cancels). Define
$\pi_{\mathrm{tgt}}(\bm y|\bm x)=(\pi^\star)^{\lambda}(\pi_{\mathrm{ref}})^{1-\lambda}/Z(\bm x)$ with
$Z(\bm x)=\sum_{\bm y}(\pi^\star)^{\lambda}(\pi_{\mathrm{ref}})^{1-\lambda}$. Expanding the KL term,
\begin{align}
J(\bm\theta)
&=\mathbb{E}_{\pi_{\bm\theta}}\big[\lambda\log\pi^\star-\lambda\log\pi_{\mathrm{ref}}-\log\pi_{\bm\theta}+\log\pi_{\mathrm{ref}}\big]\nonumber\\
&=\mathbb{E}_{\pi_{\bm\theta}}\big[\lambda\log\pi^\star+(1-\lambda)\log\pi_{\mathrm{ref}}-\log\pi_{\bm\theta}\big]
=-\,\mathrm{KL}\big(\pi_{\bm\theta}\,\|\,\pi_{\mathrm{tgt}}\big)+\log Z(\bm x).
\label{eq:gopd-identity}
\end{align}
Since $Z(\bm x)$ does not depend on $\bm\theta$, maximizing $J$ is the same as minimizing the reverse KL to $\pi_{\mathrm{tgt}}$; that is, ExOPD is standard OPD with the teacher replaced by $\pi_{\mathrm{tgt}}$, where
\begin{equation}
\log\pi_{\mathrm{tgt}}
=\log\pi^\star+(\lambda-1)\big(\log\pi^\star-\log\pi_{\mathrm{ref}}\big)+\mathrm{const}.
\label{eq:exopd-llm}
\end{equation}
For $0<\lambda<1$ the target interpolates between reference and teacher; for $\lambda>1$ it extrapolates along the reference$\to$teacher direction.

\paragraph{Diffusion.}
At denoising step $j$, the teacher and reference kernels are $p_T=\mathcal N(\bm\mu_T,\sigma_j^2\bm I)$ and $p_R=\mathcal N(\bm\mu_R,\sigma_j^2\bm I)$ with the same sampler-fixed variance (\S\ref{sec:prelim}). Applying the replacement above to the kernels gives $p_{\mathrm{tgt}}\propto p_T^{\lambda}p_R^{1-\lambda}$.

\begin{lemma}\label{lem:geo}
Let $p_1=\mathcal N(\bm\mu_1,\sigma^2\bm I)$ and $p_2=\mathcal N(\bm\mu_2,\sigma^2\bm I)$. For every $\lambda\in\mathbb R$, the normalized product $p\propto p_1^{\lambda}p_2^{1-\lambda}$ equals $\mathcal N\big(\lambda\bm\mu_1+(1-\lambda)\bm\mu_2,\ \sigma^2\bm I\big)$.
\end{lemma}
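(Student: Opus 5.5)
The plan is to work with log-densities, where the geometric mixture becomes a linear combination. Write $\log p_i(\bm x)=-\|\bm x-\bm\mu_i\|_2^2/(2\sigma^2)+C$ for $i=1,2$, where $C$ is the common normalizing constant; it is the same for both kernels because they share the covariance $\sigma^2\bm I$. Then
\begin{equation*}
\lambda\log p_1(\bm x)+(1-\lambda)\log p_2(\bm x)
=-\frac{1}{2\sigma^2}\Big(\lambda\|\bm x-\bm\mu_1\|_2^2+(1-\lambda)\|\bm x-\bm\mu_2\|_2^2\Big)+C.
\end{equation*}

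Next I will expand the squares. The quadratic term in $\bm x$ has coefficient $\lambda+(1-\lambda)=1$, so it is exactly $\|\bm x\|_2^2$ for \emph{every} real $\lambda$, including $\lambda>1$ and $\lambda<0$. The linear term is $-2\langle\bm x,\bar{\bm\mu}\rangle$ with $\bar{\bm\mu}=\lambda\bm\mu_1+(1-\lambda)\bm\mu_2$. The remaining terms are constants in $\bm x$. Completing the square therefore gives
\begin{equation*}
\lambda\log p_1+(1-\lambda)\log p_2=-\frac{\|\bm x-\bar{\bm\mu}\|_2^2}{2\sigma^2}+C',
\end{equation*}
where $C'$ collects $\lambda\|\bm\mu_1\|^2+(1-\lambda)\|\bm\mu_2\|^2-\|\bar{\bm\mu}\|^2$ and does not depend on $\bm x$.

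Consequently $p_1^\lambda p_2^{1-\lambda}=e^{C'}\exp\!\big(-\|\bm x-\bar{\bm\mu}\|^2/(2\sigma^2)\big)$, which is integrable. Normalizing yields $\mathcal N(\bar{\bm\mu},\sigma^2\bm I)$, as claimed.

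The only step needing care is checking that the normalized product is well defined for all $\lambda\in\mathbb R$. For extrapolation ($\lambda>1$) the weight $1-\lambda$ is negative, so in general a geometric mixture could fail to be integrable. Here the equal covariances make the quadratic coefficient exactly $1$, which preserves positivity of the precision, and that is the point I will emphasize. Substituting $\bm\mu=\bm\mu(v)$, which is affine in the velocity with coefficients that sum to one, then gives Eq.~\ref{eq:v-tgt}.
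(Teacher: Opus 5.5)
Your proposal is correct and follows the paper's proof essentially line for line. Both expand $\lambda\log p_1+(1-\lambda)\log p_2$, note that the coefficient of $\|\bm x\|_2^2$ is $-\tfrac{1}{2\sigma^2}$ for every $\lambda$, read the mean $\lambda\bm\mu_1+(1-\lambda)\bm\mu_2$ off the linear term, and complete the square, and your explicit integrability remark for $\lambda\notin[0,1]$ usefully spells out what the paper leaves implicit. One small correction to your closing remark: the velocity-space step works because the \emph{weights} $\lambda$ and $1-\lambda$ sum to one, so the offset $\alpha_j\bm x_{t_j}$ in $\bm\mu=\alpha_j\bm x_{t_j}+\beta_j v$ is preserved, not because the affine map's own coefficients sum to one.
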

\begin{proof}
$\log p(\bm x)\overset{c}{=}-\tfrac{\lambda}{2\sigma^2}\|\bm x-\bm\mu_1\|_2^2-\tfrac{1-\lambda}{2\sigma^2}\|\bm x-\bm\mu_2\|_2^2$.
The coefficient of $\|\bm x\|_2^2$ is $-\tfrac{\lambda+(1-\lambda)}{2\sigma^2}=-\tfrac{1}{2\sigma^2}$, independent of $\lambda$, so the covariance is $\sigma^2\bm I$; the linear term is $\tfrac{1}{\sigma^2}\langle\bm x,\lambda\bm\mu_1+(1-\lambda)\bm\mu_2\rangle$, and completing the square gives the mean.
\end{proof}

By Lemma~\ref{lem:geo}, $p_{\mathrm{tgt}}=\mathcal N\big(\lambda\bm\mu_T+(1-\lambda)\bm\mu_R,\ \sigma_j^2\bm I\big)$: extrapolation changes the mean only, and $p_{\mathrm{tgt}}$ is a valid transition kernel for every $\lambda$, including $\lambda>1$. Because the transition mean is affine in the velocity with sampler-fixed coefficients, $\bm\mu=\alpha_j\bm x_{t_j}+\beta_j v$, the target mean corresponds to $v_{\mathrm{tgt}}=\lambda v_T+(1-\lambda)v_R$ (Eq.~\ref{eq:v-tgt}). The reverse KL between the student kernel and $p_{\mathrm{tgt}}$ is a KL between Gaussians with equal covariance, $\|\bm\mu_S-\bm\mu_{\mathrm{tgt}}\|_2^2/(2\sigma_j^2)$, which in velocity space is the weighted matching loss of Eq.~\ref{eq:opd-diff} with the target velocity $v_{\mathrm{tgt}}$ in place of $v^g_T$.

\section{Detailed Analysis}
\vspace{-0.5em}
\subsection{Analysis of error cancellation}~\label{subsec:errpr_decompose}
\vspace{-2.5em}
\paragraph{The two error components.}
Given the guided output
\[
v_g=v_u+w(v_c-v_u),
\]
the guided prediction error can be written as
\[
\delta^g=\delta^u+wd,
\]
where
\[
d=\delta_c-\delta_u.
\]
Alternatively, $d$ can be expressed as
\[
d=\Delta_S-\Delta_T,
\]
where
\[
\Delta_S=v_S^c-v_S^u,\qquad
\Delta_T=v_T^c-v_T^u.
\]
These two error components have distinct meanings. Specifically, $\delta^u$ represents the prompt-independent denoising prediction error, while $d$ captures the error in the prompt-induced increment.

\begin{table}[h]
\centering\small
\caption{Decomposition $\bm \delta^g=\bm\delta^{\mathrm u}+w\bm d$ of the guided error at $w\!=\!4.5$. 
Cross term$=2\langle\bm\delta^{\mathrm u},4.5\bm d\rangle/\|\bm r\|^2$.}
\label{tab:mot-decomp}
\begin{tabular}{l ccccc}
\toprule
Student & $\|\bm\delta^{\mathrm u}\|$ & $\|\bm d\|$ & $\|\bm r\|$ & cross term \\
\midrule
DiffusionOPD$_{\mathrm{M\rightarrow M}}$ (reference) & 0.092 & 0.032 & \textbf{0.106} & $\bm{-1.56}$ \\
DiffusionOPD$_{\mathrm{L\rightarrow M}}$ & 0.090 & 0.043 & 0.177 & $-0.48$ \\
PDM$_{\mathrm{L\rightarrow M}}$ & \textbf{0.053} & 0.041 & 0.178 & $-0.18$ \\
split-KL$_{\mathrm{L\rightarrow M}}$ & \textbf{0.051} & 0.043 & 0.185 & $-0.18$ \\
\bottomrule
\end{tabular}
\label{tab:error_decompose}
\end{table}

\paragraph{Cancellation in $\mathrm{M\rightarrow M}$ and its loss in $\mathrm{L\rightarrow M}$.}
The guided objective only constrains the sum of the two components, $\delta^u+wd$. Therefore, any pair of errors satisfying $d=-\delta^u/w$ produces the same guided prediction error and lies in the null space of the objective. Consequently, minimizing this objective does not necessarily force both error components to vanish simultaneously. Instead, it allows the student to learn a negative correlation between $d$ and $\delta^u$, enabling one component to compensate for the other.

$\mathrm{DiffusionOPD}_\mathrm{M\rightarrow M}$ nearly reaches such a configuration, where the two error components $\delta^u$ and $d$ substantially cancel each other. For $\mathrm{DiffusionOPD}_\mathrm{L\rightarrow M}$, besides a similar shrinkage effect, the error in $d$ contains additional components that are orthogonal to $\delta^u$, resulting in weaker error cancellation. 

PDM and Split-KL explicitly supervise the two branches separately. Although this reduces the magnitude of the unconditional error component $\delta^u$, it also makes the two error components nearly orthogonal to each other. As a result, the cancellation effect is largely removed, and the final guided error remains substantial.

\subsection{Shrinkage of the trained field along the teacher's guided field}\label{sec:app-proj}

Tab.~\ref{tab:app-proj} quantifies the residual shrinkage of the guidance-folded student and how the
inference-time guidance increment compensates it. At every step we project the student's deployed field
onto the teacher's guided field, $g=\langle\bm v_S(w),\bm v_T^{\cfg}\rangle/\|\bm v_T^{\cfg}\|^2$, so
that $g\!=\!1$ means the student carries the teacher's full amplitude along the teacher's direction.
Results in the table is averaged over 500 prompts and over the steps of each window.

\begin{table}[h]
\centering\small
\caption{Projection $g$ of the student's deployed field onto the teacher's guided field.}
\label{tab:app-proj}
\begin{tabular}{l l c c c c c}
\toprule
Student & Probe & 0--9 & 10--19 & 20--29 & 30--37 & 0--37 (total)\\
\midrule
GFD ($w\!=\!1$) & $\mathrm{P_{tea}}$  & 0.963 & 0.984 & 0.991 & 0.983 & 0.980\\
                & $\mathrm{P_{real}}$  & 0.943 & 0.971 & 0.987 & 0.984 & 0.968\\
 
\midrule
GFD ($w\!=\!2$) & $\mathrm{P_{tea}}$   & 1.030 & 0.991 & 0.993 & 0.984 & \bf{1.000}\\
                & $\mathrm{P_{real}}$  & 1.023 & 0.978 & 0.989 & 0.985 & \bf{0.997}\\
 
\bottomrule
\end{tabular}
\end{table}

At $w\!=\!1$ the trained field of GFD is short along $\bm v_T^{\cfg}$ in every window, most so at the high-noise steps $0$--$9$.
The student has thus learned the direction of the
teacher's guidance but shrinks its magnitude toward its own prior. Raising the guidance scale to
$w\!=\!2$ compensates the shrinkage, and the
average over steps $0$--$37$ moves from $0.97$--$0.98$ to $1.00$ on both probe sets.
Besides, in the low-noise regime, the unconditional branch co-evolves with the conditional branch and exhibits similar shifts, leading to only a marginal increase of the CFG increment and avoiding excessive guidance.

\subsection{Metrics of Fig.~\ref{fig:w_sweep}}\label{subsec:fig2-metrics}

All four quantities compare a student sample with the teacher sample generated from the same prompt
and the same initial noise, over $N\!=\!500$ prompts; each axis of Fig.~\ref{fig:w_sweep} is a
log ratio to the teacher, so the teacher sits at the origin.

\paragraph{Latent frequency energies (panel a).}
Let $z\in\mathbb R^{64\times64\times64}$ be the final latent in the model's packed token layout ($64$
channels on a $64\times64$ grid) and $\hat z_c=\mathrm{DCT}_2(z_c)$ the orthonormal 2-D DCT-II of channel
$c$. With $u,v\in\{0,\tfrac1{63},\dots,1\}$ the normalized frequency indices and
$\rho(u,v)=\sqrt{u^2+v^2}/\sqrt2\in[0,1]$ the radial frequency, the energy of a band $B$ is
\begin{equation}
E_B(z)=\frac{1}{64^3}\sum_{c}\sum_{(u,v)\in B}\hat z_c(u,v)^2,\qquad
B_{\mathrm{low}}=\{\rho<0.25\},\quad B_{\mathrm{high}}=\{\rho\ge0.75\}.
\end{equation}
The two coordinates of panel (a) are
$\log\big(\tfrac1N\sum_n E_{B}(z^S_n)/E_{B}(z^T_n)\big)$ for $B=B_{\mathrm{low}}$ (horizontal) and
$B=B_{\mathrm{high}}$ (vertical), where $z^S_n$ and $z^T_n$ are the student's and the teacher's final
latents for prompt $n$.

\paragraph{Image tonal and fine-texture amplitude (panel b).}
The decoded $1024^2$ image $I$ is area-downsampled to $512^2$ and converted to luminance
$Y=0.2126R+0.7152G+0.0722B$. Tonal amplitude combines the RMS contrast and the mean saturation,
\begin{equation}
c(I)=\frac{\mathrm{std}(Y)}{\mathrm{mean}(Y)},\qquad
s(I)=\operatorname*{mean}_{p}\frac{\max_{k}I_k(p)-\min_{k}I_k(p)}{\max_{k}I_k(p)},\qquad
a_{\mathrm{tone}}(I)=\sqrt{c(I)\,s(I)},
\end{equation}
with $k$ ranging over the RGB channels and $p$ over pixels. Fine-texture amplitude is the mean
magnitude of the $3\times3$ high-pass residual of the luminance,
\begin{equation}
a_{\mathrm{tex}}(I)=\operatorname*{mean}_{p\in\Omega}\big|Y(p)-\mathrm{box}_{3\times3}Y(p)\big|,
\end{equation}
where $\mathrm{box}_{3\times3}$ is the $3\times3$ mean filter with reflect padding and $\Omega$ excludes a
$16$-pixel border. The two coordinates of panel (b) are the mean log ratios
$\tfrac1N\sum_n\log\big(a_{\mathrm{tone}}(I^S_n)/a_{\mathrm{tone}}(I^T_n)\big)$ (horizontal) and
$\tfrac1N\sum_n\log\big(a_{\mathrm{tex}}(I^S_n)/a_{\mathrm{tex}}(I^T_n)\big)$ (vertical).

\section{More Ablations}\label{app:more-ablations}

\subsection{GFD with unconditonal branch supervision}
\paragraph{Supervising the unconditional branch (GFD$+$uncond).}
GFD only constrains the conditional branch and leaves the unconditional branch unconstrained. A natural
variant, GFD$+$uncond, keeps the folded target for the conditional branch and additionally
matches the student's unconditional branch to the teacher's:
\begin{equation}
    \mathcal{L}(\bm\theta)
    =\mathbb{E}_{x_{0:N}\sim p_{S,\bm\theta}}
    \left[\sum_{j=0}^{N-1}
    \omega_j\Big(\left\| v_S^c - v_T^g \right\|_2^2
    +
    \left\| v_S^u - v_T^u \right\|_2^2\Big)
    \right].
\end{equation}

\paragraph{Same starting point.}
At $w\!=\!1$ the two students deploy fields trained toward the same target and are indistinguishable:
their KL divergence values are very similar, and their final sampling results sit at the same point
in Fig.~\ref{fig:w_sweep_uncond}.

\paragraph{Wrong increment direction.}
What changes is the content of the CFG increment $v^c_S-v^u_S$. In GFD the unconditional branch co-evolves with the conditional branch and exhibits similar shifts, resulting in the CFG only containing a small increment and avoiding excessive guidance. Raising $w$ mainly adds a native-scale texture correction. However, GFD$+$uncond matches the unconditional branch to $v^u_T$, removing the co-evolving. Its CFG increment is consequently doubled in magnitude. As demonstrated in Tab.~\ref{tab:gfduncond} and Fig.~\ref{fig:w_sweep_uncond}, this causes a larger KL divergence and drives the final sampled results further away from the teacher distribution.

\begin{table}[h]
\centering
\caption{KL divergence to the teacher for GFD and GFD$+$uncond, in the format of Tab.~\ref{tab:kl_result}.}
\label{tab:kl_uncond}

\setlength{\tabcolsep}{4pt}
\renewcommand{\arraystretch}{0.9}

\begin{tabular}{lcc}
\toprule
Student & KL ($\mathrm{P_{tea}}$) & KL ($\mathrm{P_{real}}$)\\
\midrule
GFD$_{\mathrm{L\rightarrow M}}$ ($w\!=\!1$) & 0.050 & 0.030\\
GFD$_{\mathrm{L\rightarrow M}}$ ($w\!=\!2$) & 0.053 & 0.032\\
\midrule
GFD+uncond$_{\mathrm{L\rightarrow M}}$ ($w\!=\!1$) & 0.048 & 0.029\\
GFD+uncond$_{\mathrm{L\rightarrow M}}$ ($w\!=\!2$) & 0.060 & 0.040\\
\bottomrule
\end{tabular}
\label{tab:gfduncond}
\end{table}

\begin{figure}[h]
\centering
\includegraphics[width=\linewidth]{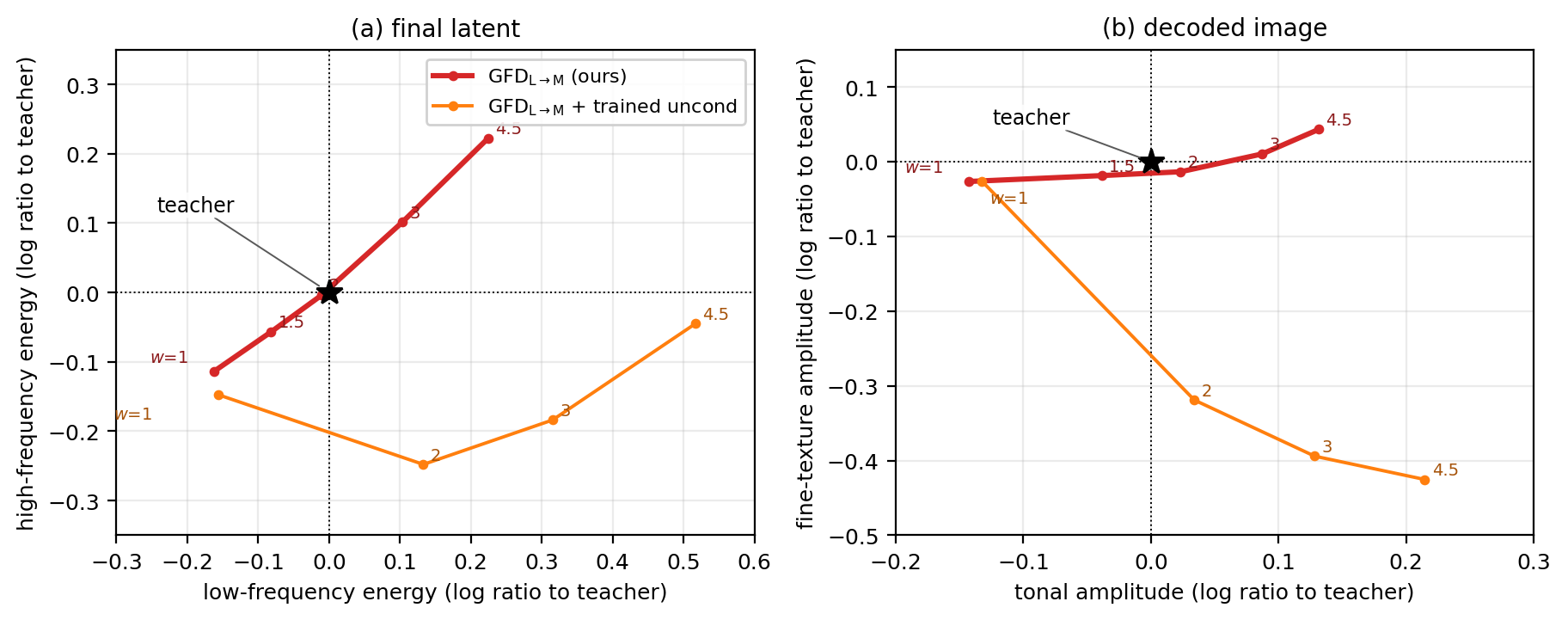}
\caption{Statistics of the final output of GFD and GFD$+$uncond as the guidance scale $w$ varies, in the
coordinates of Fig.~\ref{fig:w_sweep} (log ratio to the teacher, teacher at the origin). }
\label{fig:w_sweep_uncond}
\end{figure}

\section{Additional qualitative results on FLUX.2}\label{app:flux-quality}

\begin{figure}[htbp]
\centering
\includegraphics[width=\textwidth]{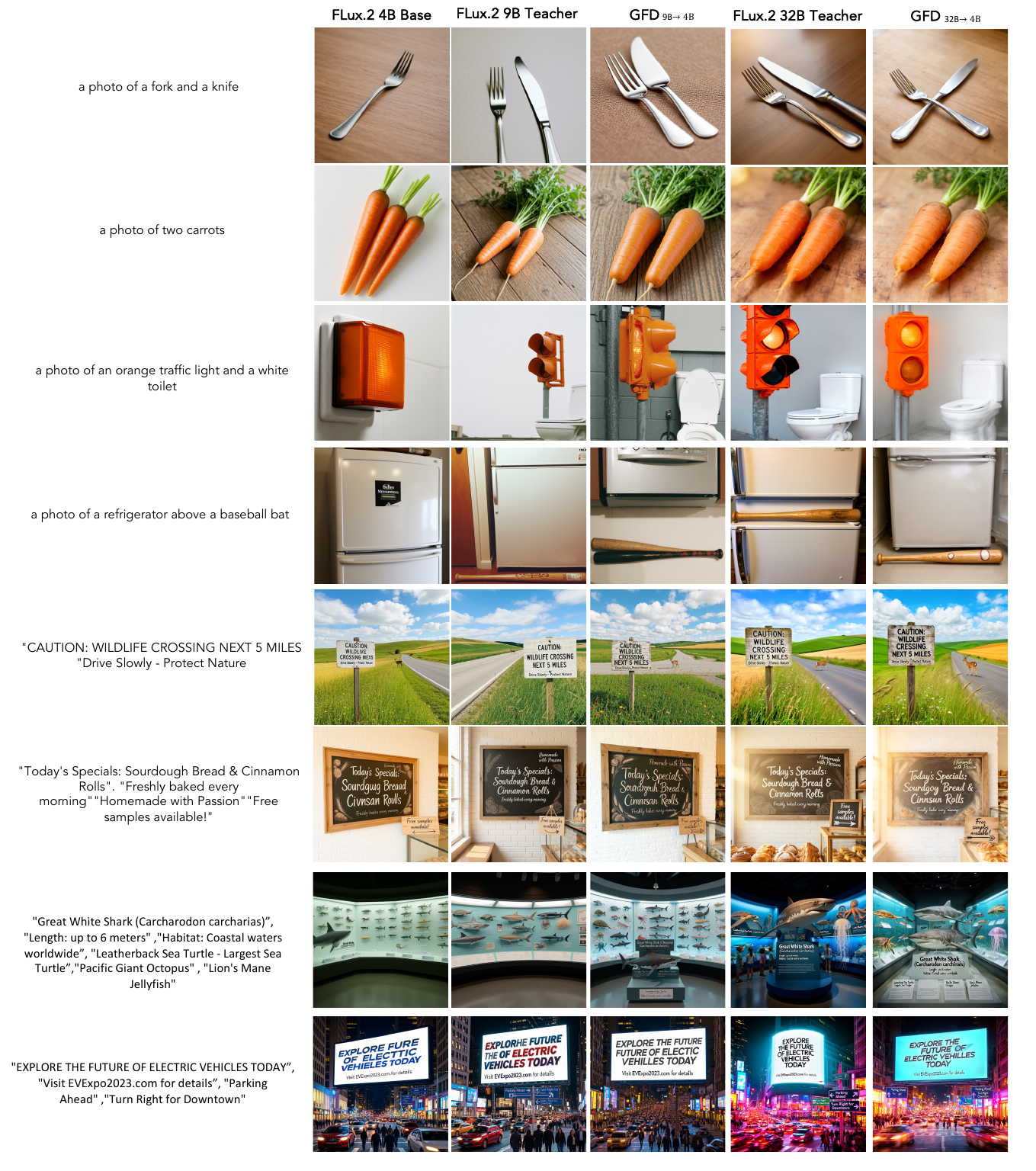}
\caption{Qualitative comparison on FLUX.2 at $1024\!\times\!1024$, complementing Table~\ref{tab:flux}. Columns, left to right: the FLUX.2-4B base model; the FLUX.2-9B teacher and the GFD-OPD student distilled from it into 4B; the FLUX.2-32B teacher and the GFD-OPD student distilled from it into 4B. The top four rows are GenEval prompts; the bottom four are LongText prompts probing long-form visual text rendering.}
\label{fig:flux-quality}
\end{figure}

\end{document}